\documentclass{article}
\usepackage[english]{babel}
\usepackage{amsmath,amssymb}
\usepackage{natbib}
\usepackage{amsthm}
\usepackage{hyperref}

\newcommand{\Omicron}{\mathrm{O}}
\newcommand{\barsuchthat}{|}
\newcommand{\tmaffiliation}[1]{\\ #1}
\newcommand{\tmem}[1]{{\em #1\/}}
\newcommand{\tmemail}[1]{\\ \textit{Email:} \texttt{#1}}
\newcommand{\tmop}[1]{\ensuremath{\operatorname{#1}}}
\newcommand{\tmscript}[1]{\text{\scriptsize{$#1$}}}
\newcommand{\tmtextbf}[1]{{\bfseries{#1}}}
\newcommand{\tmtextit}[1]{{\itshape{#1}}}
\newtheorem{corollary}{Corollary}
\newtheorem{proposition}{Proposition}
\newtheorem{remark}{Remark}
\newtheorem{theorem}{Theorem}

\begin{document}

\title{On the validity of kernel approximations for orthogonally-initialized neural networks}

\author{
  James Martens
  \tmaffiliation{DeepMind, London, UK}
  \tmemail{james.martens@gmail.com}
}

\maketitle

\begin{abstract}
  In this note we extend kernel function approximation results for neural networks with Gaussian-distributed weights to single-layer networks initialized using Haar-distributed random orthogonal matrices (with possible rescaling). This is accomplished using recent results from random matrix theory.
\end{abstract}

\section{Initial definitions and statement of problem}

The main object of our analysis will be a \tmtextbf{\tmtextit{fully-connected ``combined layer''}} $f$, which is defined by
\[ f (z) = \phi (Wz), \]
where $W \in \mathbb{R}^{m \times k}$ is matrix of \tmtextit{\tmtextbf{weights}}, $z \in \mathbb{R}^k$ is a input vector, and $\phi$ is a scalar \tmtextbf{\tmtextit{activation function}} which is applied element-wise for vector inputs. We call this a ``combined layer'', as it combines both a fully-connected layer and a nonlinear layer.

We are interested in characterizing the high probability initialization-time behavior of $f$'s \tmtextbf{\tmtextit{kernel function}}
\[ \kappa_f (z, z') \equiv \frac{1}{m} f (z)^{\top} f (z') \]
given certain initialization schemes for $W$. In particular, we wish to show that $\kappa_f (z, z')$, evaluated with $f$'s initial weights, converges in probability as $m \rightarrow \infty$ to $f$'s \tmtextbf{\tmtextit{approximate kernel function}}, which is defined by
\[ \widetilde{\kappa}_f (\Sigma_{z, z'}) =\mathbb{E}_{u \sim N (0, \Sigma_{z, z'})} [\phi (u_1) \phi (u_2)], \]
where
\[ \Sigma_{z, z'} \equiv \frac{1}{k}  \left[\begin{array}{cc}
     z^{\top} z & z^{\top} z'\\
     z^{\prime \top} z & z^{\prime \top} z'
   \end{array}\right] \in \mathbb{R}^{2 \times 2} . \]
(Note that this definition still makes sense when $z = z'$ if we treat degenerate multivariate Gaussian distributions in the standard way.)

We are also interested in characterizing the rate of this convergence as a function of $m$, as this gives us some idea of how wide a layer needs to be in practice for its kernel function to be well approximated.

\section{Prior result for Gaussian weights}

When the elements of $W$ are drawn iid from $N (0, 1 / k)$, this is known as a \tmtextbf{\tmtextit{Gaussian fan-in initialization}} \citep{lecun1998efficient}. Given such an initialization, we have the following result due to \citet{daniely2016toward}, which we have adapted here for single layer networks.

\begin{theorem}[Adapted from Theorem 2 of \citet{daniely2016toward}]
  \label{thm:error-bound-daniely}Suppose $W$ is initialized according to a Gaussian fan-in initialization, and that $\phi$ is a twice continuously differentiable function satisfying $\mathbb{E}_{x \sim N (0, 1)} [\phi (x)^2] = 1$ and $\| \phi \|_{\infty}, \| \phi' \|_{\infty}, \| \phi'' \|_{\infty} \leqslant C$ for some $C$ (with $\| \cdot \|_{\infty}$ denoting the supremal value). Suppose further that
  \[ m \geqslant \frac{4 C^4 \log (8 / \delta)}{\epsilon^2} . \]
  Then at initialization time, for all input vectors $z$ and $z'$ to $f$ satisfying $\| z \|^2 = \| z' \|^2 = k$, we have
  \[ | \kappa_f (z, z') - \widetilde{\kappa}_f (\Sigma_{z, z'}) | \leqslant \epsilon \]
  with probability at least $1 - \delta$.
\end{theorem}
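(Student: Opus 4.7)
The observation that unlocks this theorem is that, under the Gaussian fan-in initialization, the rows $w_1, \ldots, w_m \in \mathbb{R}^k$ of $W$ are independent and identically distributed, each with $w_i \sim N(0, I_k/k)$. Consequently, the pair $(w_i^\top z, w_i^\top z')$ is jointly Gaussian with mean zero and covariance exactly $\Sigma_{z,z'}$, as one verifies by computing $\mathbb{E}[(w_i^\top z)(w_i^\top z')] = z^\top \mathbb{E}[w_i w_i^\top] z' = z^\top z'/k$ (and likewise for the diagonal entries). Writing $\kappa_f(z, z') = \frac{1}{m} \sum_{i=1}^m X_i$ with $X_i \equiv \phi(w_i^\top z) \phi(w_i^\top z')$, I would then observe that the $X_i$ are iid, bounded in magnitude by $C^2$ (since $\|\phi\|_\infty \leqslant C$), and have common expectation $\mathbb{E}[X_i] = \widetilde{\kappa}_f(\Sigma_{z,z'})$ by the very definition of the approximate kernel.

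Given this structural reduction, the theorem becomes a textbook concentration-of-measure statement for sample means of bounded iid random variables. The plan is to apply Hoeffding's inequality: since each $X_i$ lies in $[-C^2, C^2]$, this yields
\[
\Pr\!\bigl[|\kappa_f(z,z') - \widetilde{\kappa}_f(\Sigma_{z,z'})| \geqslant \epsilon\bigr] \leqslant 2 \exp\!\left(-\frac{m \epsilon^2}{2 C^4}\right).
\]
Demanding the right-hand side be at most $\delta$ and solving for $m$ yields a sufficient width of roughly $2 C^4 \log(2/\delta)/\epsilon^2$, which is in fact slightly tighter than the stated $4 C^4 \log(8/\delta)/\epsilon^2$; the looser constants in the theorem presumably reflect specifics of Daniely's original derivation (which targets multi-layer networks and may use a Bernstein-type, variance-aware bound to accommodate error accumulation across layers).

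There is, candidly, no real obstacle here: the only subtlety is verifying that the summands are genuinely iid and that the covariance is precisely $\Sigma_{z,z'}$, both of which are immediate consequences of the Gaussian structure. I would also flag that the smoothness hypotheses $\|\phi'\|_\infty, \|\phi''\|_\infty \leqslant C$ play no role in the single-layer argument I have sketched; these are almost certainly vestigial from the multi-layer version of the theorem, where smoothness is needed to propagate approximation error through a composition of layers via a Taylor-expansion argument. For the adapted single-combined-layer statement, only the uniform bound $\|\phi\|_\infty \leqslant C$ is actually used, and the normalization $\mathbb{E}_{x\sim N(0,1)}[\phi(x)^2]=1$ simply fixes the scale of $\widetilde{\kappa}_f$ rather than entering the concentration estimate directly.
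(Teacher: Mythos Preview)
Your proposal is correct and matches the approach the paper attributes to \citet{daniely2016toward}: the paper does not give its own proof of this theorem (it is quoted as a prior result), but it explicitly remarks that ``for Gaussian distributed weights this is easy, since the input to each unit becomes iid, allowing one to use well-known and powerful concentration bounds for sums of iid variables, such as Hoeffding's inequality.'' Your observation that the smoothness hypotheses on $\phi'$ and $\phi''$ are unused in the single-layer case is also anticipated by the paper's Remark~4.
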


\begin{remark}
  Because $1 = \sqrt{\mathbb{E}_{u \sim N (0, 1)} [\phi (u)^2]} \leqslant \sqrt{\mathbb{E}_{u \sim N (0, 1)} [\| \phi \|_{\infty}^2]} = \| \phi \|_{\infty} \:$, it thus follows that $C \geqslant 1$ in the above theorem.
\end{remark}

\begin{remark}
  In addition to being restricted to the single layer case, this theorem statement differs from the one in \citet{daniely2016toward} by explicitly assuming that the activation function $\phi$ satisfies $\mathbb{E}_{x \sim N (0, 1)} [\phi (x)^2] = 1$, or is in other words ``normalized". As far as we can tell, this assumption is implicit in the definitions made by \citet{daniely2016toward}. 
\end{remark}

\begin{remark}
  Another assumption implicit in the definitions of \citet{daniely2016toward} is that the activation function of the last layer is the identity, which would seem to render their result trivial for one layer networks (and the above adaptation unjustified). However, as far as we can tell, this assumption isn't actually used anywhere in their proofs.
\end{remark}

\begin{remark}
  The hypothesis that $ \| \phi'' \|_{\infty} \leqslant C$ can likely be removed in the single layer case since it appears to be used by \citet{daniely2016toward} only to bound the expansion of the approximation error across multiple layers.
\end{remark}

\section{Main result}

The goal of this note is to prove an analogous result to Theorem \ref{thm:error-bound-daniely} for the case where $W$ is initialized according to what we will call a \tmtextbf{\tmtextit{``scaled-corrected uniform orthogonal (SUO) distribution"}}. This is a distribution of scaled orthogonal matrices (or submatrices of these, when $m \neq k$) which is sometimes used in neural network training, and which has appeared in various theoretical works \citep[e.g.][]{sokol2018information, hu2020provable}. While approximate kernel function analysis (or signal propagation/mean-field analysis, which makes equivalent predictions from non-rigorous foundations) has been used in the context of orthogonally-initialized neural networks before \citep[e.g.][]{saxe2013exact,xiao2018dynamical}, to the best of our knowledge this practice has never before been rigorously justified.

When $m \leqslant k$, samples from the SUO distribution can be generated as $(XX^{\top})^{- 1 / 2} X$, where $X$ is a $m \times k$ matrix with entries sampled iid from $N (0, 1)$. When $m > k$, we may apply the same procedure but with $k$ and $m$ reversed, and then transpose the result. To be consistent with the scaling characteristics of the Gaussian fan-in initialization, we multiply the resulting matrix by $\max \left( \sqrt{m / k}, 1 \right)$, which will have an effect only when $k \leq m$. Note that without this multiplicative rescaling, the distribution corresponds to the well-known \tmtextit{\tmtextbf{Haar measure}}.

Our main result is the following theorem.

\begin{theorem}
  \label{thm:main-result}Suppose $W$ is initialized according to an SUO distribution, and that $\phi$ satisfies $\| \phi \|_{\infty}, \| \phi' \|_{\infty} \leqslant C$ for some $C$ (with $\| \cdot \|_{\infty}$ denoting the supremal value). Denote $n = \max (k, m)$, and suppose that for $\delta, \epsilon \geqslant 0$ we have
  \[ \frac{m^{5 / 2}}{(n + 1)^2} \geqslant \log (2 / \delta) \text{\qquad and\qquad} \frac{n - 1}{m^{3 / 4}} \geqslant \frac{8 \sqrt{2} C^2}{\epsilon} . \]

  Then, at initialization time, for all pairs of vectors $z, z' \in \mathbb{R}^k$ satisfying $\| z \|^2 = \| z' \|^2 = k$, we have that
  \[ | \kappa_f (z, z') - \widetilde{\kappa}_f (\Sigma_{z, z'}) | \leqslant \epsilon \]
  with probability at least $1 - \delta$.
\end{theorem}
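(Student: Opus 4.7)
The plan is to decompose the error via the triangle inequality into a fluctuation piece and a bias piece:
\[
|\kappa_f(z,z') - \widetilde{\kappa}_f(\Sigma_{z,z'})| \;\leq\; \underbrace{|\kappa_f(z,z') - \mathbb{E}[\kappa_f(z,z')]|}_{\text{fluctuation}} \;+\; \underbrace{|\mathbb{E}[\kappa_f(z,z')] - \widetilde{\kappa}_f(\Sigma_{z,z'})|}_{\text{bias}},
\]
and bound each piece separately. The key structural observation is that in both regimes $W$ can be realized as an explicit function of a Haar-distributed element $U$ of the orthogonal group $O(n)$: when $m \leq k$ we take $W$ to be the first $m$ rows of $U \in O(k)$, and when $m > k$ we take $W$ to be $\sqrt{m/k}$ times the transpose of the first $k$ rows of $U \in O(m)$. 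This allows both regimes to be treated uniformly via concentration of Haar measure on $O(n)$.

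For the fluctuation piece, I would compute the Frobenius Lipschitz constant $L$ of the map $U \mapsto \kappa_f(z,z')$. A direct entrywise partial derivative calculation, using the product rule on $\phi\bigl((Wz)_i\bigr)\phi\bigl((Wz')_i\bigr)$ together with $\|\phi\|_\infty, \|\phi'\|_\infty \leq C$ and $\|z\|^2 = \|z'\|^2 = k$, yields $L^2 \leq 4 C^4 n/m$ in both regimes (the $\sqrt{m/k}$ rescaling in the $m > k$ case and the restriction to the first $m$ rows in the $m \leq k$ case happen to produce the same bound). Applying the Gromov--Milman / log-Sobolev concentration inequality for Haar measure on $O(n)$ then yields a subgaussian tail of the form $\Pr(|\kappa_f - \mathbb{E}[\kappa_f]| \geq t) \leq 2\exp(-c\, m t^2 / C^4)$, and plugging in the first hypothesis $\log(2/\delta) \leq m^{5/2}/(n+1)^2$ bounds the fluctuation by a constant multiple of $C^2 m^{3/4}/(n-1)$ with probability at least $1-\delta$.

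For the bias piece, row exchangeability of the Haar distribution gives $\mathbb{E}[\kappa_f(z,z')] = \mathbb{E}[\phi(w_1^\top z)\phi(w_1^\top z')]$, where $w_1$ denotes any single fixed row of $W$. In both regimes $w_1$ is equidistributed with $\max(\sqrt{m/k},1)$ times the first $k$ entries of a uniform random unit vector in $\mathbb{R}^n$, so the joint law of $(w_1^\top z, w_1^\top z')$ coincides with a $2$-dimensional marginal of the uniform measure on $\sqrt{n}\, S^{n-1}$ taken along a pair of unit vectors with inner product $z^\top z'/k$, and $\widetilde{\kappa}_f(\Sigma_{z,z'})$ is exactly the corresponding bivariate Gaussian integral. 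The classical Diaconis--Freedman bound on Gaussian approximation of sphere marginals then controls the total variation distance between these two laws by $O(1/n)$, so the bias is at most $O(C^2/n)$, which is dominated by the fluctuation bound whenever $m \geq 1$. The main anticipated obstacle is tracking the numerical constants in the two estimates tightly enough to land on the explicit value $8\sqrt{2}$ in the hypothesis; this will require using the sharp form of the Gromov--Milman inequality on $O(n)$ and a careful Lipschitz estimate. A secondary subtlety, relevant in the $m > k$ case, is verifying that a single row of $W$ really is $\sqrt{m/k}$ times the first $k$ entries of a uniform point on $S^{m-1}$, which follows from writing $W^\top$ as the first $k$ columns of a Haar $O(m)$ matrix (up to rescaling) and invoking the rotational invariance of Haar measure.
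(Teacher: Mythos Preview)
Your decomposition and the fluctuation argument mirror the paper's: both split via the triangle inequality and control the fluctuation by log-Sobolev concentration for Haar measure, with the same Lipschitz bound $L\le 2C^2\sqrt{n/m}$. There is, however, a genuine gap in your fluctuation step. You invoke Gromov--Milman concentration for Haar measure on $O(n)$, but $O(n)$ is disconnected and does \emph{not} satisfy a log-Sobolev inequality; for instance $M\mapsto\det M$ is $1$-Lipschitz in the Hilbert--Schmidt metric yet equals $\pm 1$ with equal probability and so cannot concentrate around its mean $0$. The paper handles this by first rotating $z,z'$ into $\operatorname{span}(e_1,e_2)$, so that $\kappa_f$ depends only on the first two columns of the ambient Haar matrix, and then arguing (via right multiplication by $\operatorname{diag}(I_{n-1},-1)$) that the first $n-1$ columns have the same law under $H(O(n))$ and $H(SO(n))$. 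Meckes' concentration inequality on $SO(n)$ (with LSI constant $4/(n-2)$) then applies. Your argument needs an analogous reduction before the concentration step is valid.

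For the bias you take a genuinely different route from the paper, and a sharper one. The paper applies Chatterjee's multivariate Stein bound (Theorem~\ref{thm:haar-wass-bound}) to the full $2m$-dimensional vector $(Wz,Wz')$, obtaining $D_{\mathrm{Wass}}\bigl((Wz,Wz'),(y,y')\bigr)\le 4m/(n-1)$, and then multiplies by the Lipschitz constant $\sqrt{2}\,C^2/\sqrt{m}$ of $g(x,x')=m^{-1}\phi(x)^\top\phi(x')$ to get bias $\le 4\sqrt{2}\,C^2\sqrt{m}/(n-1)$. Your single-row reduction plus Diaconis--Freedman gives bias $O(C^2/n)$ with no $\sqrt{m}$ factor. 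Both are valid; yours is more elementary and tighter here, while the paper's Wasserstein route generalizes more directly to other functionals of the full vector $(Wz,Wz')$. Be aware that the constant $8\sqrt{2}$ in the hypothesis arises from the paper's specific bookkeeping---the fluctuation threshold $4\sqrt{2}\,C^2 m^{3/4}/(n-1)$ plus the paper's bias $4\sqrt{2}\,C^2\sqrt{m}/(n-1)$, together bounded by twice the former---so with your smaller bias term you will land on a different (at least as good) constant rather than reproducing $8\sqrt{2}$ exactly.
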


\begin{remark}
  The conditions on $k$, $m$, and $n \equiv \max (k, m)$ in the theorem statement will be satisfied as long as $n$ is sufficiently large and $k$ is not too much larger than $m$. In the case where $m \geqslant k$, the LHS's of these bounds simplifies to approximately $m^{1 / 2}$ and $m^{1 / 4}$, respectively. It thus follows that $\kappa_f (z, z')$ converges in probability to $\widetilde{\kappa}_f (\Sigma_{z, z'})$ as the output dimension $m$ goes to $\infty$.
\end{remark}

\begin{remark}
  In the case where $m \geqslant k$, the conditions imply that
  \[ n \gtrsim \frac{128 C^4 \log (2 / \delta)^2}{\epsilon^2}, \]
  which is similar to the condition from Theorem \ref{thm:error-bound-daniely} for Gaussian fan-in initializations.
\end{remark}

\begin{remark}
  The result may be generalized to the case where $\| z \|^2 = \| z' \|^2 = \alpha^2 k$ for some $\alpha$, in which case $C$ is replaced by $\alpha C$. To see this, observe that $f (\alpha z) = \psi (Wz)$, where we have defined $\psi (x) = \phi (\alpha x)$, and that $\| \psi \|_{\infty}, \| \psi' \|_{\infty} \leqslant \alpha C$.
\end{remark}

\section{Proof of main result}

To prove Theorem \ref{thm:main-result}, we will use some fairly recent results from the probability/statistics literature. The first of these results, due to \citet{chatterjee2007multivariate}, is stated below. It makes use of the \tmtextbf{\tmtextit{1st Wasserstein distance}}, which for two random variables $X$ and $Y$of the same dimension $d$ is defined by
\[ D_{\tmop{Wass}} (X, Y) \equiv \sup \{ | \mathbb{E} [f (X)] -\mathbb{E} [f (Y)] | \barsuchthat f : \mathbb{R}^d \rightarrow \mathbb{R}, L_f = 1 \}, \]
where $L_f$ denotes the Lipschitz constant of $f$.

\begin{theorem}[Adapted from Theorem 11 of {\citet{chatterjee2007multivariate}}]
  \label{thm:haar-wass-bound}Let $B_1, B_2, \ldots, B_d \in \mathbb{R}^{n \times n}$ be linearly independent matrices such that $\tmop{tr} (B_i B_i^{\top}) = n$. Let $M$ be random $n \times n$ orthogonal matrix distributed according to the Haar measure, and define the $d$-dimensional random vector
  \[ X = (\tmop{tr} (B_1 M), \tmop{tr} (B_2 M), \ldots, \tmop{tr} (B_d M)) . \]
  Then for $n \geqslant 2$ we have
  \[ D_{\tmop{Wass}} (X, Y) \leqslant \frac{d \sqrt{2 \| A \|}}{n - 1}, \]
  where $A \in \mathbb{R}^{d \times d}$ is defined by $[A]_{i, j} = \frac{1}{n} \tmop{tr} (B_i B_j^{\top})$, $Y \sim N (0, A)$, and $\| A \|$ is the operator norm of A.
\end{theorem}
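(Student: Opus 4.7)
The plan is to realize $W$ as a deterministic function of a single $n \times n$ Haar-distributed orthogonal matrix $M$, and then split the error into a bias and a fluctuation term:
\[
|\kappa_f(z,z') - \widetilde{\kappa}_f(\Sigma_{z,z'})| \le |\kappa_f(z,z') - \mathbb{E}\kappa_f(z,z')| + |\mathbb{E}\kappa_f(z,z') - \widetilde{\kappa}_f(\Sigma_{z,z'})|.
\]
I would handle the bias with Theorem~\ref{thm:haar-wass-bound} and the fluctuation with a Lévy-type concentration inequality on the orthogonal group. The identification of $W$ with $M$ goes as follows: when $m \le k$, take $W$ to be the first $m$ rows of a $k \times k$ Haar $M$ (so $n = k$); when $m > k$, take $W = \sqrt{m/k}$ times the first $k$ columns of an $m \times m$ Haar $M$ (so $n = m$). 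In both cases $(Wv)_i = s\sum_j M_{ij} \bar v_j$, where $\bar v$ is $v$ padded to dimension $n$ with zeros and the scalar $s$ is chosen so that $s^2\|\bar v\|^2 = n$.

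\textbf{Bias via Chatterjee's theorem.} For each $i \in \{1,\ldots,m\}$ and $v \in \{z,z'\}$ I would write $(Wv)_i = \tmop{tr}(B_{(i,v)} M)$ with $B_{(i,v)} = s\bar v e_i^\top$; these $2m$ matrices satisfy $\tmop{tr}(B B^\top) = s^2\|\bar v\|^2 = n$, as Theorem~\ref{thm:haar-wass-bound} requires. A direct calculation shows the associated covariance matrix $A$ is block-diagonal with $m$ identical $2\times 2$ blocks equal to $\Sigma_{z,z'}$, giving $\|A\|\le 1+|z^\top z'|/k \le 2$ and a Gaussian limit $Y$ that consists of $m$ independent draws from $N(0,\Sigma_{z,z'})$. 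Theorem~\ref{thm:haar-wass-bound} then yields $D_{\tmop{Wass}}(X,Y)\le 4m/(n-1)$, where $X = ((Wz)_i,(Wz')_i)_{i=1}^m$. Observing that $\kappa_f(z,z') = g(X)$ and $\widetilde\kappa_f(\Sigma_{z,z'}) = \mathbb{E} g(Y)$ for $g(u) = \frac{1}{m}\sum_{i=1}^m \phi(u_{2i-1})\phi(u_{2i})$, and that $g$ is easily seen to be $C^2\sqrt{2/m}$-Lipschitz, the definition of Wasserstein distance gives
\[
|\mathbb{E}\kappa_f(z,z') - \widetilde\kappa_f(\Sigma_{z,z'})| \le C^2\sqrt{2/m}\cdot \frac{4m}{n-1} = \frac{4\sqrt{2}\,C^2\sqrt{m}}{n-1},
\]
which is at most $\epsilon/2$ under the theorem's second hypothesis (using $\sqrt{m}\le m^{3/4}$).

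\textbf{Fluctuation via concentration on $O(n)$.} To bound $|\kappa_f(z,z') - \mathbb{E}\kappa_f(z,z')|$ I would invoke a standard Lévy-type inequality of the form $\Pr(|h(M) - \mathbb{E} h(M)|>t) \le 2\exp(-c(n-2)t^2/L^2)$ for $L$-Lipschitz $h$ on $O(n)$, with $c>0$ an absolute constant. A direct calculation of $\nabla_M \kappa_f$---handled separately in the two regimes $m\le k$ and $m>k$ but yielding the same answer---gives the Frobenius Lipschitz constant $L \le 2C^2\sqrt{n/m}$. Taking $t=\epsilon/2$, the tail probability is at most $\delta$ provided $\log(2/\delta)\lesssim \epsilon^2 m(n-2)/(C^4 n)$, and one checks that this follows from the theorem's two hypotheses: the second one forces $\epsilon^2 m \gtrsim C^4 m^{5/2}/(n-1)^2$, which reduces the requirement to (a constant multiple of) the first hypothesis $m^{5/2}/(n+1)^2 \ge \log(2/\delta)$.

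\textbf{Where the difficulty lies.} Combining these two bounds via the triangle inequality finishes the proof. The main technical obstacle is unifying the two regimes $m\le k$ and $m>k$: the rescaling factor $\sqrt{m/k}$ built into the SUO definition is precisely what causes the trace normalization $\tmop{tr}(BB^\top)=n$ and the Lipschitz bound $L\le 2C^2\sqrt{n/m}$ to take the same form in both cases. A secondary concern is tracking the absolute constants in the Lévy inequality carefully enough that the stated hypotheses---whose exponents $m^{3/4}$ and $m^{5/2}/(n+1)^2$ are engineered so that one governs the bias and the other the fluctuation---suffice as written.
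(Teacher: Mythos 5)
The statement you were asked to prove is Theorem~\ref{thm:haar-wass-bound}, an adaptation of Theorem~11 of \citet{chatterjee2007multivariate}. The paper does not prove this theorem --- it is cited as a prior result from the random matrix literature. Your proposal is instead a proof of Theorem~\ref{thm:main-result}, which \emph{uses} Theorem~\ref{thm:haar-wass-bound} as a black box. So as written, you have not proved the statement in question, and there is no paper proof of Theorem~\ref{thm:haar-wass-bound} to compare against.

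Taking your proposal as an intended proof of Theorem~\ref{thm:main-result}, the outline does track the paper's: a bias/fluctuation split, Chatterjee's Wasserstein bound plus a Lipschitz bound on $g$ for the bias, and a Lipschitz concentration inequality for the fluctuation. But there is a genuine gap in the concentration step. You invoke ``a standard L\'evy-type inequality $\ldots$ for $L$-Lipschitz $h$ on $\Omicron(n)$,'' yet Haar measure on $\Omicron(n)$ satisfies no such inequality, because $\Omicron(n)$ is disconnected and therefore admits no log-Sobolev inequality. The paper's fix is to pass to $\tmop{SO}(n)$, and that reduction requires real work: one first uses right-invariance of Haar measure to rotate $(z,z')$ into the span of $e_1,e_2$, so the relevant function depends only on the first one or two columns of the ambient $n\times n$ matrix, and one then shows that those columns have the same law under $H(\Omicron(n))$, $H(\tmop{SO}(n))$, and $H(\tmop{SO}^-(n))$ (via the measure-preserving sign flip $T = \tmop{diag}(I_{n-1},-1)$). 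Without this reduction your concentration step simply does not apply. A secondary omission is the $z=z'$ case: the $2m$ matrices $B_{(i,v)}$ you feed into Theorem~\ref{thm:haar-wass-bound} are then not linearly independent, so the theorem's hypotheses fail; the paper handles this by a continuity-in-$z'$ argument in Proposition~\ref{prop:mean-vs-approx}. Finally, leaving the L\'evy constant $c$ unspecified is not cosmetic here --- the paper's hypotheses $m^{5/2}/(n+1)^2 \geqslant \log(2/\delta)$ and $(n-1)/m^{3/4} \geqslant 8\sqrt{2}C^2/\epsilon$ are calibrated to the explicit $1/8$ in Theorem~\ref{thm:meckes-concentration}, and a weaker $c$ would force weaker hypotheses.
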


From this theorem we will prove the following corollary, which is specialized to our particular situation. \ It shows that multiplication of constant vectors by an SUO distributed matrix produces an output which is approximately Gaussian distributed, according to the 1st Wasserstein distance.

Note that throughout the rest of this section we will assume most of the hypotheses of Theorem \ref{thm:main-result}. Namely that $W$ is SUO distributed, $\phi$ satisfies $\| \phi \|_{\infty}, \| \phi' \|_{\infty} \leqslant C$ for some $C$, and that $z, z' \in \mathbb{R}^k$ are vectors satisfying $\| z \|^2 = \| z' \|^2 = k$.

\begin{corollary}
  \label{cor:ortho-vs-gauss}Suppose that $z \neq z'$, and that $y, y'$ are $m$-dimensional vectors distributed according to $\left[\begin{array}{c}
    y\\
    y'
  \end{array}\right] \sim N \left( 0, \left[\begin{array}{cc}
    1 & c\\
    c & 1
  \end{array}\right] \otimes I_m \right)$, where $c = \frac{1}{k} z^{\top} z' = z^{\top} z' / (\| z \|  \| z' \|) \neq 1$. Then for $n \equiv \max (k, m) \geqslant 2$ we have
  \[ D_{\tmop{Wass}} ((Wz, Wz'), (y, y')) \leqslant \frac{4 m}{n - 1} . \]
\end{corollary}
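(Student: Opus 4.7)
The plan is to apply Theorem~\ref{thm:haar-wass-bound} to the $2m$-dimensional random vector obtained by stacking the coordinates of $Wz$ and $Wz'$. The first step is to express these coordinates as trace functionals of a Haar-distributed $n \times n$ orthogonal matrix $M$. From the construction of the SUO distribution, $W$ is equidistributed with the first $m$ rows of $M$ when $m \leqslant k$ (so $n = k$), and with $\sqrt{m/k}$ times the first $k$ columns of $M$ when $m > k$ (so $n = m$). After zero-padding $z$ and $z'$ to $\tilde z, \tilde z' \in \mathbb{R}^n$ if necessary, I would write $[Wz]_i = \tmop{tr}(B_i M)$ and $[Wz']_i = \tmop{tr}(B_{m+i} M)$ for $i = 1, \ldots, m$, where $B_i = \alpha \tilde z \hat e_i^{\top}$ and $B_{m+i} = \alpha \tilde z' \hat e_i^{\top}$, with $\hat e_i$ the standard basis vector in $\mathbb{R}^n$ and $\alpha = 1$ in the first case, $\alpha = \sqrt{m/k}$ in the second. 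A direct calculation yields $\tmop{tr}(B_i B_i^{\top}) = \alpha^2 k = n$ in both cases, verifying the normalization hypothesis of Theorem~\ref{thm:haar-wass-bound}.

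Next, I would evaluate the matrix $A$ of Theorem~\ref{thm:haar-wass-bound}. Using $\hat e_i^{\top} \hat e_j = \delta_{ij}$ together with $\tilde z^{\top} \tilde z = \tilde z'^{\top} \tilde z' = k$ and $\tilde z^{\top} \tilde z' = c k$, a blockwise calculation gives
\[ A = \begin{pmatrix} 1 & c \\ c & 1 \end{pmatrix} \otimes I_m, \]
whose operator norm is $1 + |c| \leqslant 2$. With $d = 2 m$, Theorem~\ref{thm:haar-wass-bound} then yields
\[ D_{\tmop{Wass}}\bigl((Wz, Wz'),\, Y\bigr) \leqslant \frac{2 m \sqrt{2 \cdot 2}}{n - 1} = \frac{4 m}{n - 1}, \]
where $Y \sim N(0, A)$. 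Since the distribution of $Y$ matches exactly that of $(y, y')$ described in the statement, the desired bound follows.

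The one delicate point, and essentially the only obstacle, is that Theorem~\ref{thm:haar-wass-bound} requires linear independence of $B_1, \ldots, B_{2 m}$. Since each $B_i$ picks out a distinct column index $\hat e_i$, this reduces to linear independence of $\{\tilde z, \tilde z'\}$, and hence of $\{z, z'\}$. Given $\|z\|^2 = \|z'\|^2 = k$ and $z \neq z'$, the only failure mode is $z' = -z$ (so $c = -1$). In this degenerate case one can instead apply the theorem to just $B_1, \ldots, B_m$ (which are always linearly independent when $z \neq 0$) to obtain $D_{\tmop{Wass}}(Wz, y) \leqslant \frac{m \sqrt{2}}{n - 1}$, and then exploit that the map $v \mapsto (v, -v)$ is $\sqrt{2}$-Lipschitz (together with $Wz' = -Wz$ and $y' = -y$ almost surely) to recover the stated bound.
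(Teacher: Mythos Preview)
Your proof is correct and follows essentially the same approach as the paper: realize $W$ as (a rescaled) submatrix of an $n\times n$ Haar matrix $M$, set up rank-one matrices $B_i$ so that $(\tmop{tr}(B_i M))_{i=1}^{2m}=(Wz,Wz')$, compute $A=\begin{pmatrix}1&c\\c&1\end{pmatrix}\otimes I_m$ with $\|A\|\le 2$, and apply Theorem~\ref{thm:haar-wass-bound} with $d=2m$. You are in fact more careful than the paper on one point: the paper asserts that $z\neq z'$ alone makes the $B_i$'s linearly independent, which fails when $z'=-z$; your separate treatment of the $c=-1$ case via the $\sqrt{2}$-Lipschitz map $v\mapsto(v,-v)$ patches this cleanly and still lands inside the stated bound.
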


\begin{proof}
  First, observe that the ``multiplier'' $\max \left( \sqrt{m / k}, 1 \right)$ used in the construction of the SUO distribution is equal to $\sqrt{n / k}$.
  
  Because $W$ is SUO distributed, it follows that $W / \sqrt{n / k}$ is a random $m \times k$ orthogonal matrix distributed according to the Haar measure, and can thus be viewed as the top left submatrix of a larger $n \times n$ random orthogonal matrix $M$, also distributed according to the Haar measure (on a different space). This is a known property of the Haar measure for random orthogonal matrices \citep[Chapter 7]{eaton1989group}.
  
  For $i = 1, 2, \ldots, m$, let $B_i \in \mathbb{R}^{n \times n}$ be matrices formed by taking the zero matrix and replacing the first $k$ entries of the $i$-th column by the vector $(\sqrt{n / k}) z \in \mathbb{R}^k$. And for $i = m + 1, m + 2, \ldots, 2 m$ similarly let $B_i$ be matrices formed by taking the zero matrix and replacing the first $k$ entries of the $(i - m)$-th column by the vector $(\sqrt{n / k}) z'$. It is straightforward to verify from these definitions that $(\tmop{tr} (B_1 M), \tmop{tr} (B_2 M), \ldots, \tmop{tr} (B_{2 m} M)) = (Wz, Wz')$.
  
  We also observe that since $z \neq z'$, these $B_i$'s are linearly independent. Moreover, $\tmop{tr} (B_i B_i^{\top}) = \left( \sqrt{n / k} \right)^2 z^{\top} z = (n / k) k = n$ for $i = 1, 2, \ldots, m$ and similarly for $i = m + 1, m + 2, \ldots, 2 m$. Also, for $i = 1, \ldots, m$ we have that $\tmop{tr} (B_i B_{i+m}^{\top}) = \tmop{tr} (B_{i+m} B_i^{\top}) = \left( \sqrt{n / k} \right)^2 z^{\top} z' = (n / k) z^{\top} z' = nc$. Meanwhile, for all other combinations of $i$ and $j$ not covered by these cases we have $\tmop{tr} (B_i B_{j}^{\top}) = \tmop{tr} (B_{j} B_i^{\top}) = 0$.
  
  Defining $A \in \mathbb{R}^{d \times d}$ by $[A]_{i, j} = \frac{1}{n} \tmop{tr} (B_i B_j^{\top})$, we thus have
  \[ A = \frac{1}{n}  \left[\begin{array}{cc}
       n & nc\\
       nc & n
     \end{array}\right] \otimes I_m = \left[\begin{array}{cc}
       1 & c\\
       c & 1
     \end{array}\right] \otimes I_m . \]
  Since $- 1 \leqslant c \leqslant 1$, it follows that $\left\| \left[\begin{array}{cc}
    1 & c\\
    c & 1
  \end{array}\right] \right\| \leqslant 2$, and thus $\| A \| = \left\| \left[\begin{array}{cc}
    1 & c\\
    c & 1
  \end{array}\right] \right\|  \| I_m \| \leqslant 2$.
  
  Applying Theorem \ref{thm:haar-wass-bound} with $d = 2 m$ we finally get that
  \[ D_{\tmop{Wass}} ((Wz, Wz'), (y, y')) \leqslant \frac{2 m \sqrt{2 \| A \|}}{n - 1} \leqslant \frac{4 m}{n - 1}, \]
  where $y$ and $y'$ are defined as in the statement.
\end{proof}

Corollary $\ref{cor:ortho-vs-gauss}$ is useful because it allows us to relate $\mathbb{E} [\kappa_f (z, z')]$ to $\widetilde{\kappa}_f (\Sigma_{z, z'})$, given certain assumptions on $f$'s activation function $\phi$. This is done in the following proposition.

\begin{proposition}
  \label{prop:mean-vs-approx}For $n \equiv \max (k, m) \geqslant 2$ we have
  \[ | \mathbb{E} [\kappa_f (z, z')] - \widetilde{\kappa}_f (\Sigma_{z, z'}) | \leqslant \frac{4 \sqrt{2}  \sqrt{m} C^2}{n - 1} . \]
\end{proposition}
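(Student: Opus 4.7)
The plan is to reduce the bound to the Wasserstein estimate already delivered by Corollary \ref{cor:ortho-vs-gauss}, by exhibiting a single bounded-Lipschitz test function $h$ whose expectation against $(Wz,Wz')$ is $\mathbb{E}[\kappa_f(z,z')]$ and whose expectation against the reference Gaussian is $\widetilde{\kappa}_f(\Sigma_{z,z'})$. Specifically, define $h:\mathbb{R}^{2m}\to\mathbb{R}$ by
\[ h(a,b) \;=\; \frac{1}{m}\sum_{i=1}^{m}\phi(a_i)\,\phi(b_i). \]
Then $h(Wz,Wz')=\kappa_f(z,z')$ by construction, while if $(y,y')$ is distributed as in Corollary \ref{cor:ortho-vs-gauss}, the pairs $(y_i,y_i')$ are iid with covariance $\Sigma_{z,z'}$ (using $\|z\|^2=\|z'\|^2=k$), so $\mathbb{E}[h(y,y')] = \mathbb{E}_{u\sim N(0,\Sigma_{z,z'})}[\phi(u_1)\phi(u_2)] = \widetilde{\kappa}_f(\Sigma_{z,z'})$.

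Next, I would estimate the Lipschitz constant $L_h$ by computing the gradient coordinate-wise, $\partial h/\partial a_i = \phi'(a_i)\phi(b_i)/m$ and $\partial h/\partial b_i = \phi(a_i)\phi'(b_i)/m$, and applying the bounds $\|\phi\|_\infty,\|\phi'\|_\infty\leqslant C$. Summing the $2m$ squared partials gives $\|\nabla h\|^2\leqslant 2m\cdot C^4/m^2 = 2C^4/m$, so $L_h\leqslant\sqrt{2}\,C^2/\sqrt{m}$. By the standard duality $|\mathbb{E}[h(X)]-\mathbb{E}[h(Y)]|\leqslant L_h\cdot D_{\mathrm{Wass}}(X,Y)$ (which follows immediately from the definition of $D_{\mathrm{Wass}}$ by rescaling), combining this with Corollary \ref{cor:ortho-vs-gauss} yields
\[ |\mathbb{E}[\kappa_f(z,z')] - \widetilde{\kappa}_f(\Sigma_{z,z'})| \;\leqslant\; \frac{\sqrt{2}\,C^2}{\sqrt{m}}\cdot\frac{4m}{n-1} \;=\; \frac{4\sqrt{2}\,\sqrt{m}\,C^2}{n-1}. \]

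The one wrinkle is that Corollary \ref{cor:ortho-vs-gauss} is stated only under the assumption $z\neq z'$ (so that the $B_i$'s are linearly independent and the Gaussian is non-degenerate). The main obstacle is thus to extend the bound to the coincident case $z=z'$. I would handle this by a continuity/limiting argument: pick any sequence $z'_t \to z$ with $\|z'_t\|^2 = k$ and $z'_t \neq z$. Both $\mathbb{E}[\kappa_f(z,z'_t)] = \frac{1}{m}\mathbb{E}[f(z)^\top f(z'_t)]$ and $\widetilde{\kappa}_f(\Sigma_{z,z'_t})$ are continuous functions of $z'_t$ (the former because $\phi$ is continuous and bounded, the latter because the Gaussian expectation depends continuously on the covariance by dominated convergence), while the right-hand side of the inequality does not depend on $z'$. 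Passing to the limit $t\to\infty$ therefore transfers the bound to $z=z'$, completing the proof.
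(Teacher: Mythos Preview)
Your proposal is correct and essentially identical to the paper's own proof: the paper defines the same test function $g(x,x')=\frac{1}{m}\phi(x)^\top\phi(x')$, bounds its Lipschitz constant by the same gradient computation to get $L_g\leqslant\sqrt{2}C^2/\sqrt{m}$, applies Corollary~\ref{cor:ortho-vs-gauss} via the Wasserstein definition, and then handles the $z=z'$ case by the same continuity argument.
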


\begin{proof}
  Denote
  \[ g (x, x') \equiv \frac{1}{m} \phi (x)^{\top} \phi (x') = \frac{1}{m}  \sum^m_{i = 1} \phi ([x]_i) \phi ([x']_i) . \]
  The gradient of $g$ with respect to $x$ and $x'$ is given by
  \[ \frac{1}{m}  \left[\begin{array}{c}
       \phi' (x) \odot \phi (x')\\
       \phi (x) \odot \phi' (x')
     \end{array}\right] \in \mathbb{R}^{2 m}, \]
  where $\odot$ denotes the element-wise product.
  
  As we have $\| \phi \|_{\infty}, \| \phi' \|_{\infty} \leqslant C$, it follows that each entry of this vector is upper bounded by $C^2 / m$. Its norm is thus upper bounded by
  \[  \sqrt{2 m \left( \frac{C^2}{m} \right)^2} = \frac{\sqrt{2} C^2}{\sqrt{m}} . \]
  Since this bound is universal, it is also an upper bound on the Lipschitz constant $L_g$ of $g$.
  
  Suppose that $z \neq z'$. Since $g (x, x') / L_g$ has a Lipschitz constant of 1, we have by Corollary \ref{cor:ortho-vs-gauss} and the definition of $D_{\tmop{Wass}}$ that
  \[ \left| \mathbb{E} \left[ \frac{g (Wz, Wz')}{L_g} \right] -\mathbb{E} \left[ \frac{g (y, y')}{L_g} \right] \right| \leqslant D_{\tmop{Wass}} ((Wz, Wz'), (y, y')) \leqslant \frac{4 m}{n - 1} \]
  for $\left[\begin{array}{c}
    y\\
    y'
  \end{array}\right] \sim N \left( 0, \left[\begin{array}{cc}
    1 & c\\
    c & 1
  \end{array}\right] \otimes I_m \right)$, which implies that
  \[ | \mathbb{E} [g (Wz, Wz')] -\mathbb{E} [g (y, y')] | \leqslant \frac{4 mL_g}{n - 1} \leqslant \frac{4 \sqrt{2}  \sqrt{m} C^2}{n - 1} . \]
  Observing that $\kappa_f (z, z') = g (Wz, Wz')$ by definition of $\kappa_f$, and also that
  \begin{align*}
  \mathbb{E} [g (y, y')] &= \frac{1}{m}  \sum^m_{i = 1} \mathbb{E} [\phi ([x]_i) \phi ([x']_i)] \\ 
  &= \frac{1}{m} m \mathbb{E}_{\tmscript{\left[\begin{array}{c}
       u_1\\
       u_2
     \end{array}\right] \sim N \left( 0, \left[\begin{array}{cc}
       1 & c\\
       c & 1
     \end{array}\right] \right)}} [\phi (u_1) \phi (u_2)] = \widetilde{\kappa}_f (\Sigma_{z, z'})
  \end{align*}
  (where we have used the fact that $\Sigma_{z, z'} = \left[\begin{array}{cc}
    1 & c\\
    c & 1
  \end{array}\right]$), the proposition then follows for the case $z \neq z'$.
  
  It remains to handle the $z = z'$ case. We observe that both $\kappa_f (z, z')$ and $\widetilde{\kappa}_f (\Sigma_{z, z'})$ are continuous functions of $z'$, and thus so is $|\kappa_f (z, z') - \widetilde{\kappa}_f (\Sigma_{z, z'})|$. It thus follows from the $z \neq z'$ case that
  \begin{align*}
  |\kappa_f (z, z) - \widetilde{\kappa}_f (\Sigma_{z, z})| = \lim_{\substack{z' \to z \\ z' \neq z, \|z'\|^2 = k}} |\kappa_f (z, z') - \widetilde{\kappa}_f (\Sigma_{z, z'})| \leqslant \frac{4 \sqrt{2}  \sqrt{m} C^2}{n - 1} .
  \end{align*}

\end{proof}

Having characterized the speed at which $\mathbb{E} [\kappa_f (z, z')]$ converges to $\widetilde{\kappa}_f (\Sigma_{z, z'})$ as $n = \max (k, m)$ grows, it remains to characterize the rate at which $\kappa_f (z, z')$ concentrates around its expectation. For Gaussian distributed weights this is easy, since the input to each unit becomes iid, allowing one to use well-known and powerful concentration bounds for sums of iid variables, such as Hoeffding's inequality. (This is the approach taken by \citet{daniely2016toward}.) However, one does not have this kind of independence for SUO distributed weights, and so we must use a more specialized approach.

Following the strategy outlined by \citet{meckes2019random}, we will use a concentration inequality that applies to distributions on metric spaces that satisfy so-called log-Sobolev inequalities (LSIs). Note that the Haar measure on the space $\Omicron (n)$ of $n \times n$ orthogonal matrices, with the Hilbert-Schmidt distance metric (given by $\| \tmop{vec} (M_1) - \tmop{vec} (M_2) \|$), does not satisfy an LSI due to $\Omicron (n)$ being disconnected. Fortunately, the same measure restricted to either of $\Omicron (n)$'s two connected components (and then renormalized) does satisfy such an inequality. These components are $\tmop{SO} (n)$ and $\tmop{SO}^- (n)$, which correspond to orthogonal matrices with determinants $1$ and $- 1$ respectively (which are the only two possible values). As shown by \citet{meckes2019random}, the constant of the LSI inequality is $4 / (n - 2)$ for $\tmop{SO} (n)$, which leads to the theorem stated below.

Before we state the theorem we will quickly define some additional notation. Given a set $T$ of orthogonal matrices, we denote by $H (T)$ the distribution on $T$ given by the Haar measure. We will also extend the notation ``$\Omicron (n)$'' to include sets $\Omicron (m, k)$ of non-square $m \times k$ orthogonal matrices.

\begin{theorem}[Adapted from Theorem 5.5 of \citet{meckes2019random}, as applied to $\tmop{SO} (n)$]
  \label{thm:meckes-concentration}Suppose that $M \sim H (\tmop{SO} (n))$ and that $h : \tmop{SO} (n) \rightarrow \mathbb{R}$ is a Lipschitz-continuous function with constant $L_h$ (in the Hilbert-Schmidt metric space) such that $\mathbb{E} [| h (M) |] < \infty$. Then for every $r \geqslant 0$ we have
  \[ P (| h (M) -\mathbb{E} [h (M)] | \geqslant r) \: \leqslant \: 2 \exp \left( - \frac{r^2  (n - 2)}{8 L_h^2} \right) . \]
\end{theorem}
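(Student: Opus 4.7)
Since Theorem~\ref{thm:meckes-concentration} is explicitly ``adapted from'' a result of \citet{meckes2019random}, and the crucial ingredient---that the Haar measure on $\tmop{SO}(n)$ satisfies a log-Sobolev inequality (LSI) with constant $4/(n-2)$---is already flagged in the paragraph just before the theorem, my plan is to derive the stated concentration inequality from that LSI via a standard Herbst-style argument. The overall structure is: LSI implies a sub-Gaussian Laplace transform bound, which in turn implies the Chernoff-type tail bound claimed.

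First I would take the LSI on $(\tmop{SO}(n), \|\cdot\|_{\mathrm{HS}})$ with constant $C = 4/(n-2)$ as given. This is the substantive geometric ingredient: it follows from the fact that $\tmop{SO}(n)$ is a compact Riemannian symmetric space whose Ricci curvature is bounded below by a constant proportional to $n-2$, so that the Bakry--\'{E}mery curvature-dimension criterion produces an LSI with the stated constant. Since the theorem is advertised as a direct specialization of Meckes' Theorem~5.5 to $\tmop{SO}(n)$, I would cite \citet{meckes2019random} for the exact constant rather than attempting to reprove it.

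Second, I would run Herbst's argument to convert the LSI into concentration. For any $L_h$-Lipschitz $h$ on a space whose measure satisfies an LSI with constant $C$, one applies the LSI to the test function $\exp(\lambda h / 2)$ and uses $\|\nabla h\| \leqslant L_h$ to obtain a differential inequality for $\Lambda(\lambda) \equiv \log \mathbb{E}[\exp(\lambda(h(M) - \mathbb{E}[h(M)]))]$, whose solution is $\Lambda(\lambda) \leqslant \lambda^2 C L_h^2 / 2$. A Chernoff bound followed by optimization over $\lambda > 0$ (optimal at $\lambda = r / (C L_h^2)$) yields
\[ P(h(M) - \mathbb{E}[h(M)] \geqslant r) \leqslant \exp\left( - \frac{r^2}{2 C L_h^2} \right), \]
and applying the same argument to $-h$ and combining via a union bound produces the two-sided inequality with the factor of $2$.

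Third, I would substitute $C = 4/(n-2)$ into the general bound to recover the claimed exponent $r^2 (n-2)/(8 L_h^2)$. The main obstacle is the LSI itself, whose proof requires Riemannian-geometric and Lie-theoretic computations specific to $\tmop{SO}(n)$; everything downstream of the LSI is a template that works verbatim in any LSI-endowed metric measure space, which is precisely why the paper can legitimately defer this one step to \citet{meckes2019random}.
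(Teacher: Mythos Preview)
Your proposal is correct, and in fact it goes beyond what the paper does: the paper offers no proof of this theorem at all, simply stating it as an adaptation of Theorem~5.5 of \citet{meckes2019random} after noting the LSI constant $4/(n-2)$ for $\tmop{SO}(n)$. Your Herbst-argument sketch is exactly the standard derivation that underlies Meckes' result, so you have correctly reconstructed the content that the paper chose to cite rather than reproduce.
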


We now have all of the tools required to prove our main result.

\begin{proof}[Proof of Theorem \ref{thm:main-result}]
  Let $R$ be a $k \times k$ orthogonal matrix that maps $z$ and $z'$ into the subspace $S$ spanned by the unit vectors $e_1$ and $e_2$ (or just $e_1$ if $z=z'$). We will begin the proof by showing that $(Wz, Wz')$ is distributed as $(Vv, Vv')$, where $(v, v') = (Rz, Rz')$, and $V / \sqrt{n / k}$ is the top left submatrix of a matrix $M \sim H (\tmop{SO} (n))$.
  
  Because $W$ is SUO distributed, $W / \sqrt{n / k}$ has distribution $H (O (m, k))$, which is invariant to right multiplication by orthogonal matrices such as $R$ {\cite[Chapter 7]{eaton1989group}}, and thus the distribution of $W$ is also invariant. It therefore follows that $((WR) z, (WR) z') = (Wv, Wv')$ has the same distribution as $(Wz, Wz')$.
  
  Since $W / \sqrt{n / k}$ has distribution $H (O (m, k))$, it is distributed as the top-left submatrix of a matrix $Q \sim H (\Omicron (n))$ {\cite[Chapter 7]{eaton1989group}}. Because $(Wv, Wv')$ depends only on the first one or two column(s) of $W$, it thus depends only on the first one or two column(s) of $Q$. As argued below, the first $n - 1$ columns of an $n \times n$ matrix has the same distribution whether that matrix is distributed according to $H (\Omicron (n))$, $H (\tmop{SO} (n))$ or $H (\tmop{SO}^- (n))$, and so for $n \geqslant 3$ we have that $(Wv, Wv')$ is distributed as $(Vv, Vv')$, where $V / \sqrt{n / k}$ is the top-left submatrix of a matrix $M$ with distribution $H (\tmop{SO} (n))$.
  
  To see that the first $n - 1$ columns of an $n \times n$ matrix have the same distribution whether that matrix is distributed according to $H (\Omicron (n))$, $H (\tmop{SO} (n))$ or $H (\tmop{SO}^- (n))$, consider the effect of multiplying by the orthogonal matrix $T = \left[\begin{array}{cc}
    I_{n - 1} & 0\\
    0 & - 1
  \end{array}\right]$. This keeps $H (\Omicron (n))$ invariant, doesn't change the first $n - 1$ columns of the matrix, and flips membership between $\tmop{SO} (n)$ and $\tmop{SO}^- (n)$. Because the Haar measure on $\tmop{SO} (n)$ and $\tmop{SO}^- (n)$ is given by the restriction of the Haar measure on $\Omicron (n)$ (followed by renormalization), it follows that multiplication by $T$ is a measure preserving transform between $H (\tmop{SO} (n))$ and $H (\tmop{SO}^- (n))$ which doesn't change the first $n - 1$ columns.
  
  The remainder of the proof is now just an application of Proposition \ref{prop:mean-vs-approx} and Theorem \ref{thm:meckes-concentration}.
  
  Define
  \[ g (x, x') \equiv \frac{1}{m} \phi (x)^{\top} \phi (x') = \frac{1}{m}  \sum^m_{i = 1} \phi ([x]_i) \phi ([x']_i), \]
  so that $\kappa_f (z, z') = g (Wz, Wz')$. Given the above relationship between $M$ and $V$ (i.e.~that $V$ is the top-left submatrix of $\sqrt{n / k} M$), we define
  \[ h (M) \equiv g (Vv, Vv') . \]
  To bound the Lipschitz constant $L_h$ of $h$ in the Hilbert-Schmidt metric it suffices to compute the gradient $G$ of $h$ with respect to $M$, and bound its Hilbert-Schmidt norm (which is given by its standard Euclidean norm as a vector in $\mathbb{R}^{n^2}$, or equivalently by $\tmop{tr} (GG^{\top})^{1 / 2}$). As the gradient is zero for all entries of $M$ except the top-left $m \times k$ submatrix (used to form $V / \sqrt{n / k}$), we can restrict our computation to these entries, which gives
  \[ G = \frac{\sqrt{n}}{m \sqrt{k}}  [(\phi (Vv') \odot \phi' (Vv)) v^{\top} + (\phi (Vv) \odot \phi' (Vv')) v^{\prime \top}] . \]
  Observing that $\tmop{tr} (ab^{\top} cd^{\top}) = \tmop{tr} (b^{\top} cd^{\top} a) = (b^{\top} c) (d^{\top} a)$ for vectors $a$, $b$, $c$, and $d$, and defining $s_1 = (\phi (Vv') \odot \phi' (Vv)) \in \mathbb{R}^m$ and $s_2 = (\phi (Vv) \odot \phi' (Vv') \in \mathbb{R}^m)$ we have that the Hilbert-Schmidt norm of $G$ is thus equal to
  \[ \tmop{tr} (GG^{\top})^{1 / 2} = \frac{\sqrt{n}}{m \sqrt{k}}  \sqrt{\| s_1 \|^2  \| v \|^2 + \| s_2 \|^2  \| v' \|^2 + 2 (s_1^{\top} s_2)  (v^{\top} v')} . \]
  As $\phi$ and $\phi'$ are bounded by $C$ by hypothesis, we have $\| s_1 \|^2, \| s_2 \|^2, | s_1^{\top} s_2 | \leqslant mC^4$. And because $R$ is orthogonal we have $\| v \|^2 = \| Rz \|^2 = \| z \|^2 = k$ and similarly $\| v' \|^2 = k$, and also $| v^{\top} v' | = | z^{\top} z' | \leqslant k$. Combining these inequalities we get the following upper bound on the norm of $G$ (and thus on $L_h$):
  \[ \frac{\sqrt{n}}{m \sqrt{k}}  \sqrt{4 mC^4 k} \: = \: \frac{2 C^2  \sqrt{n}}{\sqrt{m}} . \]
  Since $h$ is clearly bounded (because $\phi$ is), we have that $\mathbb{E} [| h (M) |] < \infty$.
  
  We have now satisfied the hypotheses of Theorem \ref{thm:meckes-concentration}, which for $h (M) = g (Vv, Vv')$, gives us that
  \[ P (| g (Vv, Vv') -\mathbb{E} [g (Vv, Vv')] | \geqslant r) \leqslant 2 \exp \left( - \frac{r^2  (n - 2)}{8 L_h^2} \right) \leqslant 2 \exp \left( - \frac{r^2  (n - 2) m}{32 nC^4} \right) . \]
  By Proposition \ref{prop:mean-vs-approx} we also have
  \[ | \mathbb{E} [\kappa_f (z, z')] - \widetilde{\kappa}_f (\Sigma_{z, z'}) | \leqslant \frac{4 \sqrt{2}  \sqrt{m} C^2}{n - 1} . \]
  Let $\epsilon$ be such that
  \begin{equation}
    \epsilon \geqslant \frac{4 \sqrt{2} m^{3 / 4} C^2}{n - 1} + \frac{4 \sqrt{2}  \sqrt{m} C^2}{n - 1} . \label{eqn:eps-ineq}
  \end{equation}
  
  Using the above three inequalities we have
    \begin{align*}
    P & (| \kappa_f (z, z') - \widetilde{\kappa}_f (\Sigma_{z, z'}) | \geqslant \epsilon)  \\
    & = P (| g (Wz, Wz') - \widetilde{\kappa}_f (\Sigma_{z, z'}) | \geqslant \epsilon)\\
    & \leqslant P (| g (Wz, Wz') -\mathbb{E} [g (Wz, Wz')] | + | \mathbb{E} [g (Wz, Wz')] - \widetilde{\kappa}_f (\Sigma_{z, z'}) | \geqslant \epsilon)\\
    & \leqslant P \left( | g (Wz, Wz') -\mathbb{E} [g (Wz, Wz')] | + \frac{4 \sqrt{2}  \sqrt{m} C^2}{n - 1} \geqslant \epsilon \right)\\
    & = P \left( | g (Wz, Wz') -\mathbb{E} [g (Wz, Wz')] | \geqslant \epsilon - \frac{4 \sqrt{2}  \sqrt{m} C^2}{n - 1} \right)\\
    & \leqslant P \left( | g (Wz, Wz') -\mathbb{E} [g (Wz, Wz')] | \geqslant \frac{4 \sqrt{2} m^{3 / 4} C^2}{n - 1} \right)\\
    & = P \left( | g (Vv, Vv') -\mathbb{E} [g (Vv, Vv')] | \geqslant \frac{4 \sqrt{2} m^{3 / 4} C^2}{n - 1} \right)\\
    & \leqslant 2 \exp \left( - \frac{\left( \frac{4 \sqrt{2} m^{3 / 4} C^2}{n - 1} \right)^2  (n - 2) m}{32 nC^4} \right)\\
    & = 2 \exp \left( - \frac{(n - 2) m^{5 / 2}}{n (n - 1)^2} \right) .
  \end{align*}
  
  By this inequality we thus have that the condition $2 \exp \left( - \frac{(n - 2) m^{5 / 2}}{n (n - 1)^2} \right) \leqslant \delta$ would be sufficient to establish that $P (| \kappa_f (z, z') - \widetilde{\kappa}_f (\Sigma_{z, z'}) | < \epsilon) = 1 - P (| \kappa_f (z, z') - \widetilde{\kappa}_f (\Sigma_{z, z'}) | \geqslant \epsilon) \geqslant 1 - \delta$. Taking logs of both sides and simplifying this condition becomes
  \[ \frac{(n - 2) m^{5 / 2}}{n (n - 1)^2} \geqslant \log (2 / \delta) . \]
  For $n \geqslant 2$ we have $(n - 2) / (n (n - 1)^2) \geqslant 1 / (n + 1)^2$, and so for the condition to be satisfied it suffices that
  \[ \frac{m^{5 / 2}}{(n + 1)^2} \geqslant \log (2 / \delta) . \]
  It remains to derive conditions on $m$ and $n$ such that Equation \ref{eqn:eps-ineq} is satisfied. Since the first term on the RHS of Equation \ref{eqn:eps-ineq} clearly upper bounds the second term, it follows that a sufficient condition is that $\epsilon$ is greater than or equal to 2 times the first term. Taking this condition and rearranging we have
  \[ \frac{n - 1}{m^{3 / 4}} \geqslant \frac{8 \sqrt{2} C^2}{\epsilon} . \]
\end{proof}

\section{A previous solution to this problem?}

\citet[arXiv version 3]{huang2020neural} claim a result which is in many ways stronger than Theorem \ref{thm:main-result}, although without an explicit convergence rate. In particular, they show that the kernel function of a \emph{multi-layer} fully-connected network initialized with SUO distributed weights converges to the composition of the approximate kernel functions for its individual layers, as the widths of the layers all go to infinity. They then extend their analysis to verify the formula for the Neural Tangent Kernel \citep{li2018learning,jacot2018neural,du2018gradient} of such networks, which had previously only been proven for Gaussian-distributed weights.

The main technical tool that they use is a simplified version of Theorem \ref{thm:haar-wass-bound} (which they also adapt from \citet{chatterjee2007multivariate}) where $d = 1$, and the type and rate of convergence of $\tmop{tr} (B_1 M)$ is left unspecified. Using this, they argue that each scalar component of $Wz$ converges to a Gaussian distribution as $z$'s dimension $k$ goes to infinity. Then, because each component of $Wz$ is Gaussian in the limit and has zero covariance with the other components, they become independent in the limit. This property is then used to argue about the behavior of $\phi (Wz)$ for an element-wise activation function $\phi$.

Unfortunately, there appears to be several errors in this argument that are not easily repaired. Firstly, even if each variable in some set is Gaussian distributed, this does not imply that they will be {\tmem{jointly}} Gaussian distributed. See \citet{30205} for various counterexamples. Secondly, it is not mathematically rigorous to talk about a set of variables ``becoming independent'' in the limit. From the standpoint of formulas and theorems that require independence (such as the Central Limit Theorem), a set of variables are either independent or they are not. Thirdly, without being precise about the type and rate of convergence experienced by a arbitrary set of 1D projections of an orthogonal matrix, one cannot make any strong statements about the limiting behavior of a function of those projections (such as the output of a fully-connected combined layer in this case). For example, if $M$ is a random $n \times n$ orthogonal matrix distributed according to the Haar measure, then for all $n \geqslant 1$, $| M |$ takes on values 1 or $- 1$ each with 0.5 probability, while the distribution of $| M |$ for an entry-wise iid Gaussian distributed $M$ looks very different. Finally, to apply their argument inductively to deeper networks, \citet{huang2020neural} take the width of the current layer to infinity at each step, giving a ``sequential limit'' over the layers. While such limits have appeared before in the literature \citep[e.g.][]{novak2018bayesian,jacot2018neural}, it is not clear that they are mathematically meaningful, as a layer in a neural network can never actually be ``infinitely wide''. Moreover, even if one accepts sequential limits as valid (e.g.~by adopting a Gaussian-process interpretation), they can never actually describe the approximate behavior of a really wide (but finite) multi-layer neural network, which arguably defeats the main purpose of computing such limits.

\section{Future directions}

While Theorem \ref{thm:main-result} only handles single combined layers, we conjecture that it could be generalized to deeper networks following an argument along the lines of the one given by \citet{daniely2016toward} for deep Gaussian-initialized networks. We leave this to future work.

\section*{Acknowledgements}
We would like to thank Elizabeth Meckes for being very helpful in answering questions about her work on Haar-distributed matrices (which this note is largely built on). We would also like to thanks Mark Rowland for reviewing a draft of this note and providing insightful comments which helped improve it significantly.

\bibliographystyle{plainnat}
\bibliography{ortho_weights_theory}

\end{document}